\newtheorem{theorem}{Theorem}
\newtheorem{lemma}[theorem]{Lemma}
\newtheorem{corollary}[theorem]{Corollary}
\newtheorem{proposition}[theorem]{Proposition}
\theoremstyle{remark}
\newtheorem{remark}{Remark}
\theoremstyle{example}
\theoremstyle{definition}
\theoremstyle{remark}
\definecolor{sylvain}{rgb}{0,.6,0}
\definecolor{orange}{rgb}{1,0.5,0}
\def\bydef{:=}
\def\F{\mathcal{F}}
\def\X{\mathcal{X}}
\def\Z{\mathcal{Z}}
\def\R{\mathcal{R}}
\def\GAN{\mathrm{GAN}}
\def\fGAN{\mathrm{f,GAN}}
\def\KL{\mathrm{KL}}
\def\JS{\mathrm{JS}}
\def\VAE{\mathrm{VAE}}
\def\POT{\mathrm{POT}}
\def\AAE{\mathrm{AAE}}
\def\WGAN{\mathrm{WGAN}}
\def\AVB{\mathrm{AVB}}
\def\supp{\mathbf{supp}\,}
\def\H{\mathcal{H}}
\newcommand{\oo}[1]{\text{\sf 1}_{[#1]}}
\def\E{\mathbb{E}}
\newcommand{\e}[1]{\mathbb{E}\left[#1 \right]}
\newcommand{\ee}[2]{\mathbb{E}_{#1}\left[#2 \right]}
\newcommand\independent{\protect\mathpalette{\protect\independenT}{\perp}}
\def\independenT#1#2{\mathrel{\rlap{$#1#2$}\mkern2mu{#1#2}}}
\title{From optimal transport to generative modeling:\\the VEGAN cookbook}
\author[1]{Olivier Bousquet}
\author[1]{Sylvain Gelly}
\author[2]{Ilya Tolstikhin}
\author[2]{Carl-Johann Simon-Gabriel}
\author[2]{Bernhard Sch\"olkopf}
\affil[1]{Google Brain}
\affil[2]{Max Planck Institute for Intelligent Systems}
\date{}
\begin{document}

\maketitle
\begin{abstract}
We study unsupervised generative modeling in terms of the optimal transport (OT) problem between true (but unknown) data distribution $P_X$ and the \emph{latent variable model} distribution $P_G$. 
We show that the OT problem can be equivalently written in terms of probabilistic encoders, which are constrained to match the posterior and prior distributions over the latent space.
When relaxed, this constrained optimization problem leads to a \emph{penalized optimal transport} (POT) objective, which can be efficiently minimized using stochastic gradient descent by sampling from $P_X$ and $P_G$.
We show that POT for the 2-Wasserstein distance coincides with the objective heuristically employed in adversarial auto-encoders (AAE) \cite{MSJ+16}, which provides the first theoretical justification for AAEs known to the authors.
We also compare POT to other popular techniques like variational auto-encoders (VAE)~\cite{KW14}. Our theoretical results include (a) a better understanding of the commonly observed blurriness of images generated by VAEs, and (b) establishing duality between Wasserstein GAN~\cite{AB17} and POT for the 1-Wasserstein distance.

\end{abstract}
\section{Introduction}
The field of representation learning was initially driven by supervised approaches, with impressive results using large labelled datasets. Unsupervised generative modeling, in contrast, used to be a domain governed by probabilistic approaches focusing on low-dimensional data. 
Recent years have seen a convergence of those two approaches. In the new field that formed at the intersection, variational autoencoders (VAEs) \cite{KW14} form one well-established approach, theoretically elegant yet with the drawback that they tend to generate blurry images. In contrast, generative adversarial networks (GANs) \cite{goodfellow2014generative} turned out to be more impressive in terms of the visual quality of images sampled from the model, but have been reported harder to train. There has been a flurry of activity in assaying numerous configurations of GANs as well as combinations of VAEs and GANs. A unifying theory
relating GANs to VAEs in a principled way is yet to be discovered. This forms a major motivation for the studies underlying the present paper.

Following \cite{AB17}, we approach generative modeling from the optimal transport point of view. 
The optimal transport (OT) cost \cite{V03} is a way to measure a distance between probability distributions and provides a much weaker topology than many others, including $f$-divergences associated with the original GAN algorithms.
This is particularly important in applications, where data is usually supported on low dimensional manifolds in the input space $\X$.
As a result, stronger notions of distances (such as $f$-divergences, which capture the density ratio between distributions) often max out, providing no useful gradients for training.
In contrast, the optimal transport behave nicer and may lead to a more stable training \cite{AB17}.

In this work we aim at minimizing the optimal transport cost $W_c(P_X,P_G)$ between the true (but unknown) data distribution $P_X$ and a \emph{latent variable model} $P_G$.
We do so via the primal form and investigate theoretical properties of this optimization problem. 
Our main contributions are listed below; cf.\ also Figure \ref{fig:main}.
\begin{enumerate}
\item
We derive an equivalent formulation for the \emph{primal form} of $W_c(P_X, P_G)$, which makes the role of latent space $\Z$ and probabilistic encoders $Q(Z|X)$ explicit (Theorem \ref{thm:main}).
\item
Unlike in VAE, we arrive at an optimization problem where the $Q$ are constrained.
We relax the constraints by penalization, arriving at the \emph{penalized optimal transport} (POT) objective \eqref{eq:our-algo} which can be minimized with stochastic gradient descent by sampling from $P_X$~and~$P_G$.
\item
We show that for squared Euclidean cost $c$ (Section \ref{sec:VAEs}), POT coincides with the objective of adversarial auto-encoders (AAE) \cite{MSJ+16}. We believe this provides the first theoretical justification for AAE, showing that they approximately minimize the 2-Wasserstein distance $W_2(P_X,P_G)$.
We also compare POT to VAE, adversarial variational Bayes (AVB)~\cite{MNG17}, and other methods based on the marginal log-likelihood.
In particular, we show that all these methods \emph{necessarily} suffer from blurry outputs, unlike POT or AAE.
\item
When $c$ is the Euclidean distance (Section \ref{sec:WGAN}), POT and WGAN \cite{AB17} both minimize the 1-Wasserstein distance $W_1(P_X,P_G)$.
They approach this problem from primal/dual forms respectively, which leads to a different behaviour of the resulting algorithms.
\end{enumerate}

\begin{figure}[t]
\hspace{0.01\linewidth}
\begin{subfigure}[t]{0.45\linewidth}
    \caption{VAE and AVB}
    \includegraphics[width=1.0\linewidth]{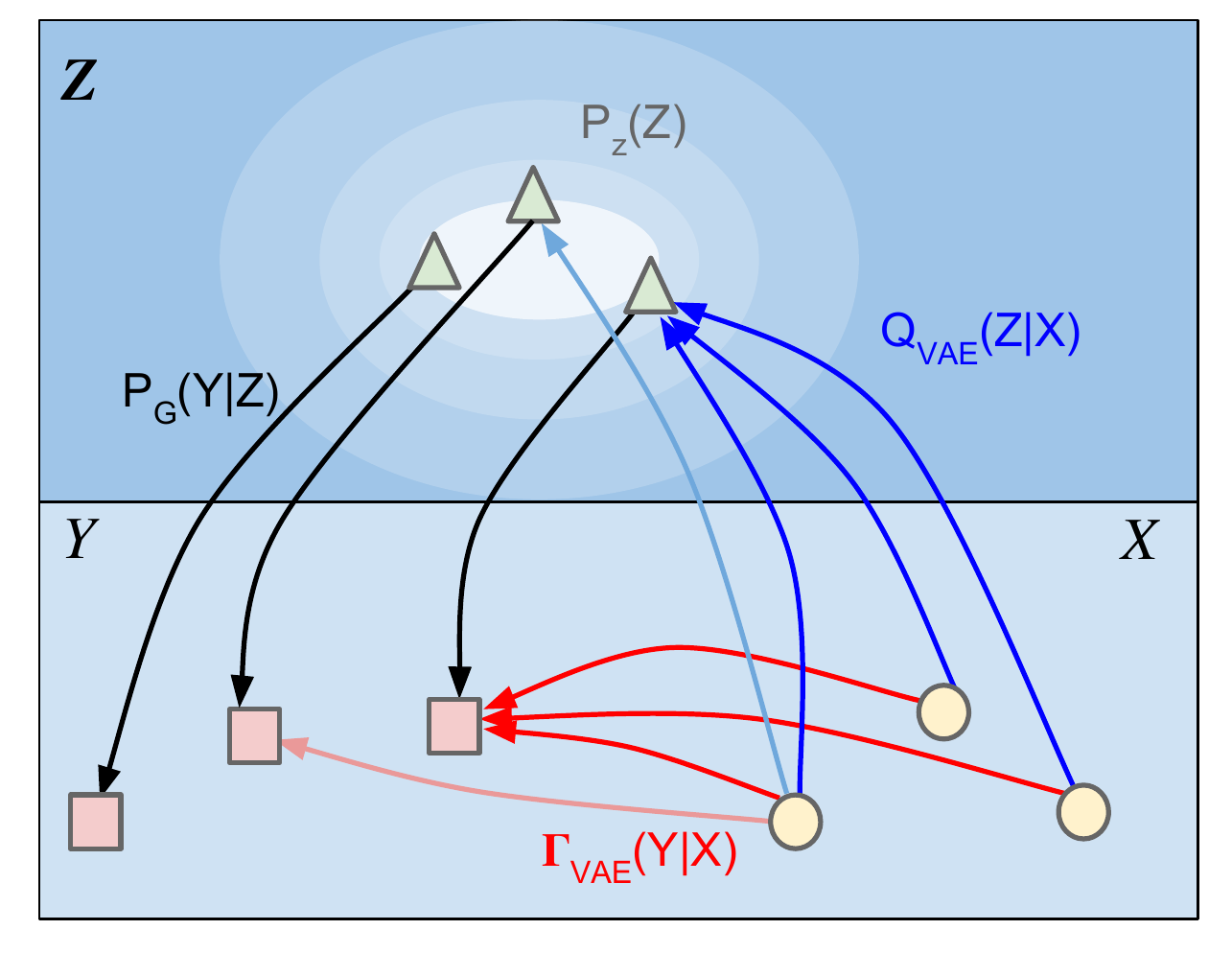}
  \end{subfigure}
  \hspace{0.05\linewidth}
  \begin{subfigure}[t]{0.45\linewidth}
    \caption{Optimal transport (primal form) and AAE}
    \includegraphics[width=1.0\linewidth]{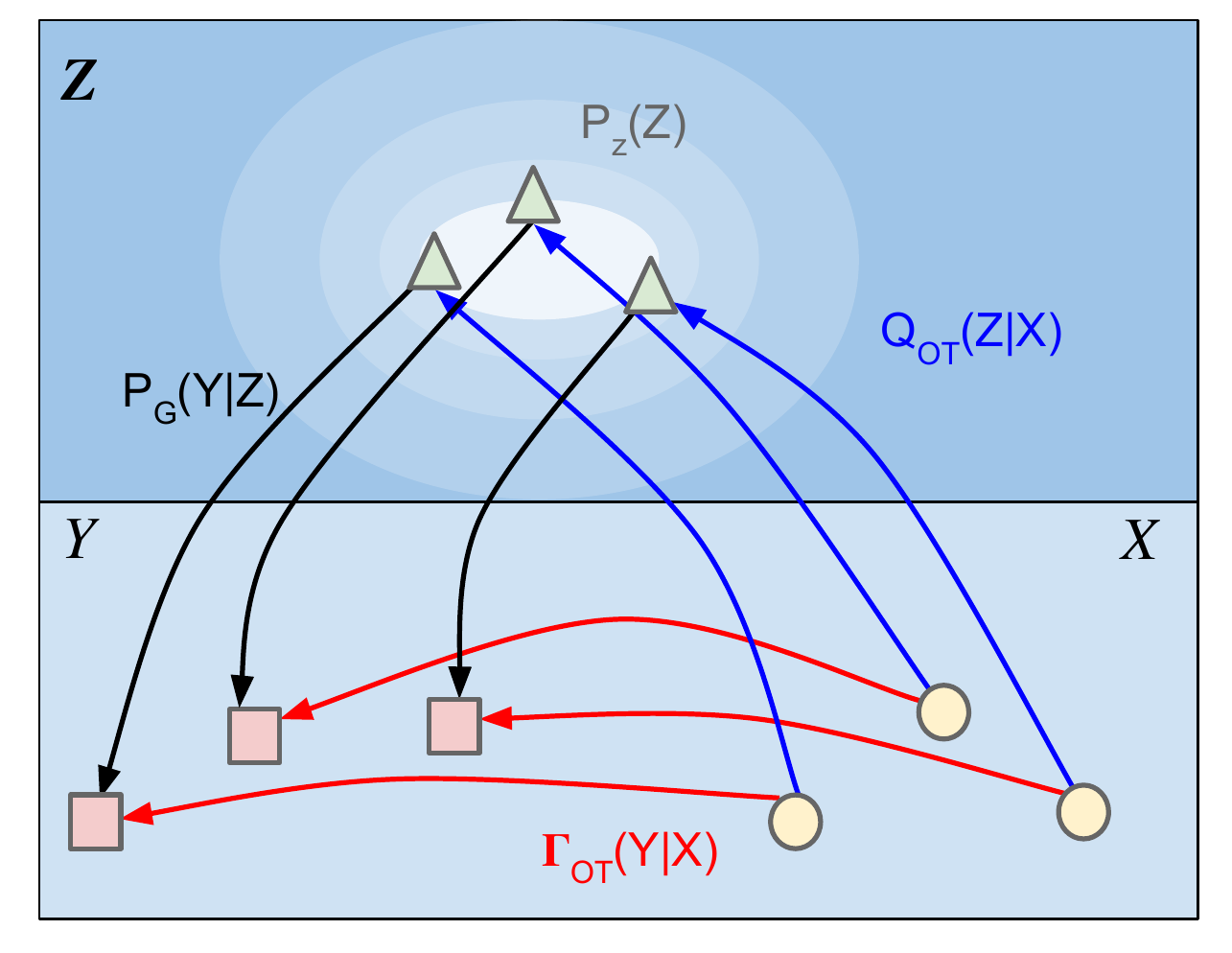}
  \end{subfigure}

  \caption{Different behaviours of generative models. The top half represents the latent space $\Z$ with codes (triangles) sampled from $P_Z$. The bottom half represents the data space $\X$, with true data points $X$ (circles) and generated ones $Y$ (squares). The arrows represent the conditional distributions. Generally these are not one to one mappings, but for improved readability we show only one or two arrows to the most likely points. 
On the \textbf{left figure}, describing VAE \cite{KW14} and AVB \cite{MNG17}, $\Gamma_{\VAE}(Y|Z)$ is a composite of the encoder $Q_{\VAE}(Z|X)$ and the decoder $P_G(Y|Z)$, mapping each true data point $X$ to a distribution on generated points $Y$. For a fixed decoder, the optimal encoder $Q^*_{\VAE}$ will assign mass proportionally to the distance between $Y$ and $X$ and the probability $P_Z(Z)$ (see Eq.\,\ref{eq:vae-opt-q}). 
We see how different points $X$ are mapped with high probability to the same $Y$, while the other generated points $Y$ are reached only with low probabilities.
On the \textbf{right figure}, the OT is expressed as a conditional mapping $\Gamma_{\mathrm{OT}}(Y | X)$. One of our main results (Theorem \ref{thm:main}) shows that this mapping  can be reparametrized via transport $X\to Z \to Y$, making explicit a role of the encoder $Q_{\mathrm{OT}}(Z|X)$.}
  \label{fig:main}
\end{figure}

Finally, following a somewhat well-established tradition of using acronyms based on the ``GAN'' suffix, and because we give a recipe for blending VAE and GAN (using POT), we propose to call this work a VEGAN cookbook.

\paragraph{Related work}
The authors of \cite{GCPB16} address computing the OT cost in large scale using stochastic gradient descent (SGD) and sampling.
They approach this task either through the dual formulation, or via a regularized version of the primal.
They do not discuss any implications for generative modeling.
Our approach is based on the primal form of OT, we arrive at regularizers which are very different, and our main focus is on generative modeling.
The Wasserstein GAN \cite{AB17} minimizes the 1-Wasserstein distance $W_1(P_X,P_G)$ for generative modeling.
The authors approach this task from the dual form.
Unfortunately, their algorithm cannot be readily applied to any other OT cost, because the famous Kantorovich duality holds only for $W_1$.
In contrast, our algorithm POT approaches the same problem from the primal form and can be applied for any cost function $c$.

\medskip
The present paper is structured as follows.
In Section \ref{sec:preliminary} we introduce our notation and discuss existing generative modeling techniques, including GANs, VAEs, and AAE.
Section \ref{sec:main} contains our main results, including a theoretical analysis of the primal form of OT and the novel POT objective.
Section~\ref{sec:implications} discusses the implications of our new results.
Finally, we discuss future work in Section~\ref{sec:conclusion}.
Proofs may be found in Appendix \ref{appendix:proofs}.

\section{Notations and preliminaries}
\label{sec:preliminary}
We use calligraphic letters (i.e.\:$\X$) for sets,
capital letters (i.e.\:$X$) for random variables,
and lower case letters (i.e.\:$x$) for their values.
We denote probability distributions with capital letters (i.e.\:$P(X)$) and densities 
with lower case letters (i.e.\:$p(x)$).
By $\delta_x$ we denote the Dirac distribution putting mass 1 on $x\in\X$, and $\supp P$ denotes the support of $P$.

We will often need to measure the agreement between two probability distributions $P$ and $Q$ and there are many ways to do so. 
The class of \emph{$f$-divergences}~\cite{LM08} is defined by $D_f(P\|Q):=\int f\bigl(\frac{p(x)}{q(x)}\bigr)q(x)dx$, where $f\colon (0,\infty)\to \R$ is any convex function satisfying $f(1)=0$. 
It is known that $D_f\geq 0$ and $D_f=0$ if $P=Q$.
Classical examples include the Kullback-Leibler $D_\KL$ and Jensen-Shannon $D_\JS$ divergences.
Another rich class of divergences is induced by the \emph{optimal transport} (OT) problem \cite{V03}.
Kantorovich's formulation~\cite{K42} of the problem is given by
\begin{equation}
\label{eq:ot}
W_c(P,Q):=\inf_{\Gamma\in \mathcal{P}(X\sim P,Y\sim Q)} \E_{(X,Y)\sim\Gamma}[c(X,Y)]\,,
\end{equation}
where $c(x,y)\colon \X \times \X \to \R_+$ is any measurable cost function
and $\mathcal{P}(X\sim P,Y\sim Q)$ is a set of all joint distributions of $(X,Y)$ with marginals $P$ and $Q$ respectively. 
A particularly interesting case is when $(\X,d)$ is a metric space and $c(x,y) = d^p(x,y)$ for $p\geq 1$.
In this case $W_{p}$, the $p$-th root of~$W_c$, is called \emph{the $p$-Wasserstein distance}.
Finally, the Kantorovich-Rubinstein theorem establishes a duality for the 1-Wasserstein distance, which holds under mild assumptions on $P$ and $Q$:
\begin{equation}
\label{eq:KRD}
W_{1}(P,Q) = \sup_{f\in \mathcal{F}_L} \bigl| \E_{X\sim P}[f(X)] - \E_{Y \sim Q}[ f(Y)] \bigr|,
\end{equation}
where $\mathcal{F}_L$ is the class of all bounded 1-Lipschitz functions on $(\X,d)$. 
Note that the same symbol is used for $W_p$ and $W_c$, but only $p$ is a number and thus the above $W_1$ refers to the Wasserstein distance.



\subsection{Implicit generative models: a short tour of GANs and VAEs}
\label{sec:vae+gan}
Even though GANs and VAEs are quite different---both in terms of the conceptual frameworks and empirical performance---they share important features: (a) both can be trained by sampling from the model $P_G$ without knowing an analytical form of its density and (b) both can be scaled up with SGD.
As a result, it becomes possible to use highly flexible \emph{implicit} models $P_G$ defined by a two-step procedure, where first a code $Z$ is sampled from a fixed distribution $P_Z$ on a latent space $\Z$ and then $Z$ is mapped to the image $G(Z)\in\X=\R^d$ with a (possibly random) transformation $G\colon\Z\to\X$.
This results in \emph{latent variable models} $P_G$ defined on $\X$ with density of the form
\begin{equation}
\label{eq:latent-var}
p_G(x):= \int_{\Z} p_G(x|z)p_z(z) d z,\quad \forall x\in\X,
\end{equation}
assuming all involved densities are properly defined.
These models are indeed easy to sample and, provided $G$ can be differentiated analytically with~respect to its parameters, $P_G$ can be trained with SGD.
The field is growing rapidly and numerous variations of VAEs and GANs are available in the literature. 
Next we introduce and compare several of them.

The original {\bf generative adversarial network} (GAN) \cite{goodfellow2014generative} approach minimizes
\begin{equation}
\label{eq:GAN}
D_\GAN(P_X, P_G) = 
\sup_{T\in\mathcal{T}}
\E_{X\sim P_X}[\log T(X)] + \E_{Z\sim P_Z}\bigl[\log\bigl(1 - T(G(Z))\bigr)\bigr]
\end{equation}
with respect to a deterministic \emph{generator} $G\colon\Z\to\X$, where $\mathcal{T}$ is any non-parametric class of choice.
It is known that $D_\GAN(P_X, P_G)\leq2\cdot D_\JS(P_X, P_G) - \log(4)$ and the inequality turns into identity in the \emph{nonparametric limit}, that is when the class $\mathcal{T}$ becomes rich enough to represent \emph{all} functions mapping $\X$ to $(0,1)$.
Hence, GANs are \emph{minimizing a lower bound} on the JS-divergence.
However, GANs are not only linked to the JS-divergence: the $f$-GAN approach \cite{nowozin2016f} showed that a slight modification $D_\fGAN$ of the objective \eqref{eq:GAN} allows to lower bound any desired $f$-divergence in a similar way.
In practice, both generator $G$ and \emph{discriminator} $T$ are trained in alternating SGD steps.
Stopping criteria as well as adequate evaluation of the trained GAN models remain open questions.

Recently, the authors of \cite{AB17} argued that the 1-Wasserstein
distance $W_{1}$, which is known to induce a much weaker topology than
$D_\JS$, may be better suited for generative modeling.  When $P_X$ and
$P_G$ are supported on largely disjoint low-dimensional manifolds
(which may be the case in applications), $D_\KL$, $D_\JS$, and other
strong distances between $P_X$ and $P_G$ max out and no longer provide
useful gradients for $P_G$.  This ``vanishing gradient'' problem
necessitates complicated scheduling between the $G$/$T$ updates.
In~contrast, $W_{1}$ is still sensible in these cases and provides
stable gradients.  The {\bf Wasserstein GAN} (WGAN) minimizes
\[
D_\WGAN(P_X, P_G) = 
\sup_{T\in\mathcal{W}}
\E_{X\sim P_X}[T(X)] - \E_{Z \sim P_Z}\bigl[ T(G(Z))\bigr],
\]
where $\mathcal{W}$ is any subset of 1-Lipschitz functions on $\X$.
It follows from \eqref{eq:KRD} that $D_\WGAN(P_X, P_G) \leq W_{1}(P_X, P_G)$ and thus WGAN is \emph{minimizing a lower bound} on the 1-Wasserstein distance.

{\bf Variational auto-encoders} (VAE) \cite{KW14} utilize models $P_G$ of the form \eqref{eq:latent-var}
and minimize
\begin{equation}
\label{eq:VAE}
D_\VAE(P_X,P_G) = 
\inf_{Q(Z|X)\in\mathcal{Q}} 
\E_{P_X}\left[
D_\KL\bigl(Q(Z|X), P_Z\bigr)
-
\E_{Q(Z|X)}[\log p_G(X|Z)]
\,
\right]
\end{equation}
with respect to a random \emph{decoder} mapping $P_G(X|Z)$.
The conditional distribution $P_G(X|Z)$ is often parametrized by a deep net $G$ and can have any form as long as its density $p_G(x|z)$ can be computed and differentiated with respect to the parameters of $G$.
A typical choice is to use Gaussians $P_G(X|Z) = \mathcal{N}(X;G(Z), \sigma^2\cdot I)$.
If $\mathcal{Q}$ is the set of \emph{all} conditional probability distributions $Q(Z|X)$, 
the objective of VAE coincides with the negative marginal log-likelihood $D_\VAE(P_X, P_G) = -\E_{P_X}[\log P_G(X)]$.
However, in order to make the $D_\KL$ term of~\eqref{eq:VAE} tractable in closed form,
the original implementation of VAE uses a standard normal $P_Z$ and restricts $\mathcal{Q}$ to a class of Gaussian distributions $Q(Z|X) = \mathcal{N}\bigl(Z; \mu(X), \Sigma(X)\bigr)$ with mean $\mu$ and diagonal covariance~$\Sigma$ parametrized by deep nets.
As a consequence, VAE is \emph{minimizing an upper bound} on the negative log-likelihood or, equivalently, on the KL-divergence $D_\KL(P_X, P_G)$.
Further details can be found in Section \ref{appendix:VAE}.

One possible way to reduce the gap between the true negative log-likelihood and the upper bound provided by $D_\VAE$ is to enlarge the class $\mathcal{Q}$.
{\bf Adversarial variational Bayes} (AVB)~\cite{MNG17} follows this argument by employing the idea of GANs.
Given any point $x\in\X$, a noise $\epsilon\sim\mathcal{N}(0,1)$, and any fixed transformation $e\colon \X\times\R\to\Z$, a random variable $e(x, \epsilon)$ implicitly defines one particular conditional distribution $Q_e(Z|X=x)$.
AVB allows $\mathcal{Q}$ to contain all such distributions for different choices of $e$,
replaces the intractable term $D_\KL\bigl(Q_e(Z|X), P_Z\bigr)$ in \eqref{eq:VAE} by the adversarial approximation $D_\fGAN$ corresponding to the KL-divergence, and proposes to minimize\footnote{
The authors of AVB \cite{MNG17} note that using $f$-GAN as described above actually results in ``unstable training''.
Instead, following the approach of \cite{PAS+16}, they use a trained discriminator $T^*$ resulting from the $D_\GAN$ objective~\eqref{eq:GAN} to approximate the ratio of densities and then directly estimate the KL divergence $\int f\bigl(p(x)/q(x)\bigr)q(x)dx$.}
\begin{equation}
\label{eq:obj-AVB}
D_\AVB(P_X, P_G) = 
\inf_{Q_e(Z|X)\in\mathcal{Q}} 
\E_{P_X}\left[
D_\fGAN\bigl(Q_e(Z|X), P_Z\bigr)
-
\E_{Q_e(Z|X)}[\log p_G(X|Z)]\,
\right].
\end{equation}

The $D_\KL$ term in \eqref{eq:VAE} may be viewed as a regularizer. 
Indeed, VAE reduces to the classical unregularized auto-encoder if this term is dropped, minimizing the reconstruction cost of the encoder-decoder pair $Q(Z|X), P_G(X|Z)$. 
This often results in different training points being encoded into non-overlapping zones chaotically scattered all across the $\Z$ space with ``holes'' in between where the decoder mapping $P_G(X|Z)$ has never been trained.
Overall, the encoder $Q(Z|X)$ trained in this way does not provide a useful representation and sampling from the latent space $\Z$ becomes hard \cite{BCV13}.

{\bf Adversarial auto-encoders} (AAE) \cite{MSJ+16}  
replace the $D_\KL$ term in~\eqref{eq:VAE} with another regularizer:
\begin{equation}
\label{eq:obj-AAE}
D_\AAE(P_X, P_G) = 
\inf_{Q(Z|X)\in\mathcal{Q}} 
D_\GAN(Q_Z, P_Z)
-
\E_{P_X}\E_{Q(Z|X)}[\log p_G(X|Z)],
\end{equation}
where $Q_Z$ is the marginal distribution of $Z$ when first $X$ is sampled from $P_X$ and then $Z$ is sampled from $Q(Z|X)$, also known as the \emph{aggregated posterior} \cite{MSJ+16}.
Similarly to AVB, there is no clear link to log-likelihood, as $D_\AAE \leq D_{\AVB}$ (see Appendix \ref{appendix:VAE}).
The~authors of~\cite{MSJ+16} argue that matching  $Q_Z$ to $P_Z$ in this way ensures that there are no ``holes'' left in the latent space $\Z$ and $P_G(X|Z)$ generates reasonable samples whenever $Z\sim P_Z$.
They also report an equally good performance of different types of conditional distributions $Q(Z|X)$, including Gaussians as used in VAEs, implicit models $Q_e$ as used in AVB, and \emph{deterministic} encoder mappings, i.e.\:$Q(Z| X) = \delta_{\mu(X)}$ with $\mu\colon\X\to\Z$.
 
\section{Minimizing the primal of optimal transport}
\label{sec:main}
We have argued that minimizing the optimal transport cost $W_c(P_X, P_G)$ between the true data distribution $P_X$ and the model~$P_G$ is a reasonable goal for generative modeling. We now will explain how this can be done in the primal
formulation of the OT problem \eqref{eq:ot} by
reparametrizing the space of couplings (Section \ref{sec:reparam}) and relaxing the marginal constraint (Section \ref{sec:consequences}), leading to a formulation involving expectations over $P_X$ and $P_G$ that can thus be solved using SGD and sampling.

\subsection{Reparametrization of the couplings}
\label{sec:reparam}
We will consider certain sets of joint probability distributions of three random variables $(X,Y,Z)\in\X\times\X\times\Z$.
The reader may wish to think of $X$ as true images, $Y$ as images sampled from the model, and $Z$ as latent codes.
We denote by $P_{G,Z}(Y,Z)$ a joint distribution of a variable pair $(Y,Z)$, where $Z$ is first sampled from $P_Z$ and next $Y$ from $P_G(Y|Z)$.
Note that $P_G$ defined in \eqref{eq:latent-var} and used throughout this work is the marginal distribution of $Y$ when $(Y,Z)\sim P_{G,Z}$.

In the optimal transport problem, we consider joint distributions $\Gamma(X,Y)$ which are
called~\emph{couplings} between values of $X$ and~$Y$. Because of the marginal constraint, we can
write $\Gamma(X,Y)=\Gamma(Y|X)P_X(X)$ and we can consider $\Gamma(Y|X)$ as a non-deterministic mapping from $X$ to $Y$.
In this section we will show how to~\emph{factor} this mapping through $\Z$, i.e., decompose it
into an encoding distribution $Q(Z|X)$ and the generating distribution $P_G(Y|Z)$.

In order to give a more intuitive explanation of this decomposition, consider the case where all probability distributions have densities with respect to the Lebesgue measure.
In this case our results show that \emph{some} elements $\Gamma$ of $\mathcal{P}(X\sim P_X, Y\sim P_G)$ have densities of the form
\begin{equation}
\label{eq:densities}
\gamma(x,y) = \int_{\Z} p_G(y|z)q(z|x)p_X(x)dz,
\end{equation}
where the density of the conditional distribution $Q(Z|X)$ satisfies $q_Z(z):=\int_{\X} q(z|x)p_X(x)dx = p_Z(z)$ for all $z\in\Z$.
Equation \eqref{eq:densities} allows to express the search space (couplings) of the OT problem in terms of the probabilistic encoders $Q(Z|X)$.
Unlike VAE, AVB, and other methods based on the marginal log-likelihood, where the $Q$ are not constrained, the encoders of the OT problem need to match the aggregated posterior $Q_Z$ to the prior $P_Z$.

\paragraph{Formal statement}
As in Section~\ref{sec:preliminary}, $\mathcal{P}(X\sim P_X, Y\sim P_G)$ denotes the set of all joint distributions of $(X,Y)$ with marginals $P_X,P_G$, and likewise for $\mathcal{P}(X\sim P_X, Z\sim P_Z)$.
The set of all joint distributions of $(X,Y,Z)$ such that $X\sim P_X$, $(Y,Z)\sim P_{G,Z}$, and $(Y\independent X)|Z$ will be denoted by~$\mathcal{P}_{X,Y,Z}$.
Finally, we denote by $\mathcal{P}_{X,Y}$ and $\mathcal{P}_{X,Z}$ the sets of marginals on $(X,Y)$ and $(X,Z)$ (respectively) induced by distributions in $\mathcal{P}_{X,Y,Z}$. Note that $\mathcal{P}(P_X,P_G)$, $\mathcal{P}_{X,Y,Z}$, and $\mathcal{P}_{X,Y}$ depend on the choice of conditional distributions $P_G(Y|Z)$, while $\mathcal{P}_{X,Z}$ does not.
In fact, it is easy to check that $\mathcal{P}_{X,Z} = \mathcal{P}(X\sim P_X, Z\sim P_Z)$.
From the definitions it is clear that $\mathcal{P}_{X,Y} \subseteq\mathcal{P}(P_X, P_G)$ and we get the following upper bound:
\begin{equation}
\label{eq:thm-main-1}
W_c(P_X, P_G) 
\leq
W_c^\dagger(P_X, P_G)
\bydef \inf_{P\in \mathcal{P}_{X,Y}} \ee{(X, Y)\sim P}{c(X,Y)}
\end{equation}
If $P_G(Y|Z)$~are~Dirac measures (i.e., $Y=G(Z)$), the two sets are actually coincide, thus justifying the reparametrization \eqref{eq:densities} and the illustration in Figure \ref{fig:main}(b), as demonstrated in the following theorem:
\begin{theorem}
\label{thm:main}
If $P_G(Y|Z=z)=\delta_{G(z)}$ for all $z\in \Z$, where $G\colon \Z\to\X$, we have
\begin{align}
\label{eq:thm-main-2}
W_c(P_X, P_G) 
=
W_c^\dagger(P_X, P_G)
&=
\inf_{P\in \mathcal{P}(X\sim P_X, Z\sim P_Z)} \E_{(X,Z)\sim P} \bigl[c\bigl(X,G(Z)\bigr)\bigr]\\
\label{eq:final-constrained-objective}
&= 
\inf_{Q\colon Q_Z =P_Z} \E_{P_X} \E_{Q(Z|X)} \bigl[c\bigl(X,G(Z)\bigr)\bigr],
\end{align}
where $Q_Z$ is the marginal distribution of $Z$ when $X\sim P_X$ and $Z\sim Q(Z|X)$.

\end{theorem}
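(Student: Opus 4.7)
The strategy is to chain the three equalities, exploiting the assumption $P_G(Y|Z) = \delta_{G(Z)}$, which forces $Y = G(Z)$ almost surely under any law in $\mathcal{P}_{X,Y,Z}$. We already have $W_c \le W_c^\dagger$ from \eqref{eq:thm-main-1}, so the hard direction is $W_c^\dagger \le W_c$. The remaining two equalities will then follow by straightforward reparametrization.

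For the key inequality, I plan to lift every coupling $\Gamma \in \mathcal{P}(X\sim P_X, Y\sim P_G)$ to an element of $\mathcal{P}_{X,Y,Z}$ sharing the same $(X,Y)$-marginal. Using a regular conditional probability (valid under standard Polish-space assumptions), disintegrate $P_{G,Z}(Y,Z) = P_G(Y)\,P_Z(Z|Y)$. Because $Y = G(Z)$ holds $P_{G,Z}$-almost surely, the kernel $P_Z(\cdot|Y=y)$ is concentrated on $G^{-1}(y)$ for $P_G$-almost every $y$. Now define
\[
P(X, Y, Z) := P_X(X)\,\Gamma(Y|X)\,P_Z(Z|Y).
\]
A direct check shows that the $(X,Y)$-marginal of $P$ is $\Gamma$, the $(Y,Z)$-marginal is $P_{G,Z}$, and $Y = G(Z)$ holds $P$-almost surely; since this makes $Y$ a measurable function of $Z$, we automatically get $Y \independent X \mid Z$. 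Hence $P \in \mathcal{P}_{X,Y,Z}$, so $\Gamma \in \mathcal{P}_{X,Y}$, and taking the infimum over $\Gamma$ yields $W_c^\dagger \le W_c$, establishing the first equality.

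For \eqref{eq:thm-main-2}, every $P \in \mathcal{P}_{X,Y,Z}$ satisfies $Y = G(Z)$ a.s., so $\E[c(X,Y)] = \E[c(X,G(Z))]$, and its $(X,Z)$-marginal lies in $\mathcal{P}(X\sim P_X, Z\sim P_Z)$; conversely, any joint on $(X,Z)$ with these marginals can be extended by $Y := G(Z)$ to land in $\mathcal{P}_{X,Y,Z}$ at the same cost, so the two infima coincide. Finally, \eqref{eq:final-constrained-objective} follows by writing any coupling on $(X,Z)$ with $X$-marginal $P_X$ as $P_X(X)\,Q(Z|X)$, at which point the marginal constraint $Z \sim P_Z$ becomes $Q_Z = P_Z$. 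The principal obstacle in the entire argument is the measure-theoretic disintegration step: one must invoke existence of the regular conditional kernel $P_Z(\cdot|Y)$ and verify it is supported on the fibers $G^{-1}(y)$ of $G$. This is routine under the mild hypotheses that $\X$ and $\Z$ are Polish and $G$ is Borel measurable, which are satisfied in all settings of interest for generative modeling.
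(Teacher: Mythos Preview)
Your proposal is correct and follows essentially the same route as the paper. The paper isolates the key step as a lemma asserting $\mathcal{P}_{X,Y}=\mathcal{P}(P_X,P_G)$ in the Dirac case, whose proof is the one-line observation that $Y=G(Z)$ forces $(Y\independent X)\mid Z$ automatically; the construction of a lift of $\Gamma$ to a joint $(X,Y,Z)$ with $(Y,Z)$-marginal $P_{G,Z}$ is left implicit (it is the standard gluing of $\Gamma$ and $P_{G,Z}$ along their common $Y$-marginal). Your explicit disintegration $P(X,Y,Z)=P_X(X)\,\Gamma(Y|X)\,P_Z(Z|Y)$ is precisely that gluing written out, and your treatment of the remaining two equalities matches the paper's tower-rule computation and the noted identity $\mathcal{P}_{X,Z}=\mathcal{P}(X\sim P_X,Z\sim P_Z)$.
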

The r.h.s.\:of \eqref{eq:thm-main-2} is the optimal transport $W_{c_G}(P_X,P_Z)$ between $P_X$ and $P_Z$ with the cost function $c_g(x,z):= c\bigl(x, G(z)\bigr)$ defined on $\X\times \Z$.
If $P_G(Y|Z)$ corresponds to a deterministic mapping $G\colon\Z\to\X$, the~resulting model $P_G$ is the \emph{push-forward of $P_Z$ through $G$}.
Theorem \ref{thm:main} states that in this case the two OT problems are equivalent, $W_c(P_X, P_G) = W_{c_G}(P_X,P_Z)$.

The conditional distributions $Q$ in \eqref{eq:final-constrained-objective} are constrained to ensure that when $X$ is sampled from $P_X$ and then $Z$ from $Q(Z|X)$, the resulting marginal distribution of $Z$ (which was denoted~$Q_Z$ and called \emph{aggregated posterior} in Section \ref{sec:vae+gan}) coincides with $P_Z$.
A very similar result also holds for the case when $P_G(Y|Z)$ are not necessarily Dirac.
Nevertheless, for simplicity we will focus on the Dirac case for now and shortly summarize the more general case in the end of this section.


\subsection{Relaxing the constraints}
\label{sec:consequences}

Minimizing $W_c(P_X, P_G)$ boils down to a min-min optimization problem.
Unfortunately, the constraint on $Q$ makes the variational problem even harder.
We thus propose to replace the constrained optimization problem \eqref{eq:final-constrained-objective} with its relaxed unconstrained version in a standard way.
Namely, use any convex \emph{penalty} $F\colon Q\to\R_+$, such that $F(Q)=0$ if and only if $P_Z=Q_Z$, and for any $\lambda > 0$, consider the following relaxed unconstrained version of $W_{c}^{\dagger}(P_X,P_G)$:
\begin{equation}
\label{eq:final-relaxed-objective}
W_{c}^{\lambda}(P_X,P_G):
=\inf_{Q(Z|X)} \E_{P_X} \E_{Q(Z|X)} \bigl[c\bigl(X,G(Z)\bigr)\bigr] + \lambda F(Q)
\end{equation}
It is well known \cite{BL06} that under mild conditions adding a penalty as in \eqref{eq:final-relaxed-objective} is equivalent to adding a constraint of the form $F(Q) \leq \mu_{\lambda}$ for some $\mu_{\lambda} > 0$.
As $\lambda$ increases, the corresponding $\mu_\lambda$ decreases, and as $\lambda \to \infty$, the solutions of \eqref{eq:final-relaxed-objective} reach the feasible region where $P_Z=Q_Z$.
This shows that $W_{c}^{\lambda}(P_X,P_G) \leq W_{c}(P_X,P_G)$ for all $\lambda \geq 0$ and the gap reduces with increasing $\lambda$.

One possible choice for $F$ is a convex divergence between the prior $P_Z$ and the aggregated posterior~$Q_Z$, such as $D_\JS(Q_Z, P_Z)$, $D_\KL(Q_Z, P_Z)$, or any other member of the $f$-divergence family.
However, this results in intractable $F$. 
Instead, similarly to AVB, we may utilize the adversarial approximation $D_\GAN(Q_Z, P_Z)$, which becomes tight in the nonparametric limit.
We thus arrive at the problem of minimizing a \emph{penalized optimal transport} (POT) objective
\begin{equation}
\label{eq:our-algo}
D_\POT(P_X,P_G):=
\inf_{Q(Z|X)\in\mathcal{Q}} \E_{P_X} \E_{Q(Z|X)} \bigl[c\bigl(X,G(Z)\bigr)\bigr] + \lambda \cdot D_\GAN(Q_Z, P_Z),
\end{equation}
where $\mathcal{Q}$ is any nonparametric set of conditional distributions.
If the cost function $c$ is differentiable, this problem can be solved with SGD similarly to AAE, where we iterate between updating (a) an encoder-decoder pair $Q,G$ and (b) an adversarial discriminator of $D_\GAN$, trying to separate latent codes sampled from $P_Z$ and $Q_Z$.
Moreover, in Section \ref{sec:VAEs} we will show that $D_\POT$ \emph{coincides} with $D_\AAE$ when $c$ is the squared Euclidean cost and the $P_G(Y|Z)$ are Gaussian.
The good empirical performance of AAEs reported in \cite{MSJ+16} provides what we believe to be rather strong support of our theoretical results.

\paragraph{Random decoders}
The above discussion assumed Dirac measures $P_G(Y|Z)$.
If this is not the case, we can still \emph{upper bound} the 2-Wasserstein distance $W_{2}(P_X,P_G)$, corresponding to $c(x,y)=\|x-y\|^2$, in a very similar way to Theorem \ref{thm:main}.
The case of Gaussian decoders $P_G(Y|Z)$, which will be particularly useful when discussing the relation to VAEs,
is summarized in the following remark:
\begin{remark}
\label{remark:gauss}
For $\X = \R^d$ and Gaussian $P_G(Y|Z)=\mathcal{N}(Y;G(Z), \sigma^2\!\cdot\! I_d)$ the value of $W_c(P_X, P_G)$ is upper bounded by $W_c^\dagger(P_X, P_G)$, which coincides with the r.h.s.\:of \eqref{eq:final-constrained-objective} up to a $d\cdot\sigma^2$ additive term (see Corollary \ref{corr:gauss} in Section \ref{appendix:corr-gauss}).
In other words, objective \eqref{eq:final-relaxed-objective}  coincides with the relaxed version of $W_c^\dagger(P_X,P_G)$ up to additive constant,
while $D_\POT$ corresponds to its adversarial approximation.
\end{remark}

\section{Implications: relations to AAE, VAEs, and GANs}
\label{sec:implications}
Thus far we showed that the primal form of the OT problem can be relaxed to allow for efficient minimization by SGD.
We now discuss the main implications of these results.

In~the following Section \ref{sec:VAEs}, we compare minimizing the optimal transport cost $W_c$, the upper bound $W_c^\dagger$, and its relaxed version $D_\POT$ to
VAE, AVB, and AAE in the special case when $c(x,y)=\|x-y\|^2$ and $P_G(Y|Z)=\mathcal{N}(Y; G(Z), \sigma^2\!\cdot\! I)$.
We show that in this case
{\bf (i)} the solutions of VAE and AVB both depend on $\sigma^2$, while the minimizer $G^\dagger$ of $W_c^\dagger(P_X,P_G)$ does not depend on $\sigma^2$, and is the same as the minimizer of $W_c(P_X,P_G)$ for $\sigma^2=0$;
{\bf (ii)} AAE is equivalent to minimizing $D_\POT$ of \eqref{eq:our-algo} with 
$\lambda = 2\sigma^2$.
We also briefly discuss the role of these conclusions in explaining the well-known blurriness of VAE outputs.
Section \ref{sec:WGAN} shows that when $c(x,y)=\|x-y\|$, our algorithm and WGAN approach primal and dual forms respectively of the \emph{same optimization problem}.
Finally, we discuss a difference in behaviour between the two algorithms caused by this duality.

\subsection{The 2-Wasserstein distance: relation to VAE, AVB, and AAE}
\label{sec:VAEs}
Consider the squared Euclidean cost function $c(x,y) = \|x - y\|^2$, for which $W_c$ is the squared 2-Wasserstein distance $W_{2}^2$.
The goal of this section is to compare the minimization of $W_2(P_X, P_G)$ to other generative modeling approaches.
Let us focus our attention on generative distributions $P_G(Y|Z)$ typically used in VAE, AVB, and AAE, i.e., Gaussians $P_G(Y|Z)=\mathcal{N}\bigl(Y; G(Z), \sigma^2\!\cdot\!I_d\bigr)$.
In~order to verify the differentiability of $\log p_G(x|z)$ all three methods require $\sigma^2 > 0$ and have problems handling the case of deterministic decoders ($\sigma^2=0$).
To emphasize the role of the variance $\sigma^2$ we will denote the resulting latent variable model $P_G^{\sigma}$.

\paragraph{Relation to VAE and AVB}
The analysis of Section \ref{sec:main} shows that the value of $W_2(P_X,P_G^\sigma)$ is upper bounded by $W_c^\dagger(P_X,P_G^\sigma)$ of the form \eqref{eq:upper-bound-gauss} and the two coincide when $\sigma^2 = 0$.
Next we summarize properties of solutions $G$ minimizing these two values $W_2$ and $W_c^\dagger$:
\begin{proposition}
\label{prop:vaes}
Let $\X = \R^d$ and assume $c(x,y) = \|x - y\|^2$, $P_G(Y|Z)=\mathcal{N}\bigl(Y; G(Z), \sigma^2\!\cdot\!I\bigr)$ with any function $G\colon\X\to\R$.
If $\sigma^2 > 0$ then the functions $G^*_{\sigma}$ and $G^\dagger$ minimizing $W_c(P_X,P_G^\sigma)$ and $W_c^\dagger(P_X,P_G^\sigma)$ respectively are different: $G^*_{\sigma}$ depends on $\sigma^2$, while $G^\dagger$ does not.
The function $G^\dagger$ is also a minimizer of $W_c(P_X,P_G^0)$.
\end{proposition}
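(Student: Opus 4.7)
The plan is to split the proposition into the clean structural part about $G^\dagger$ and the counterexample-based part about $G^*_\sigma$. The first part follows immediately from Remark~\ref{remark:gauss} together with Theorem~\ref{thm:main} applied at $\sigma^2=0$; the second part is best established by an explicit 1D Gaussian example where both quantities can be computed in closed form.

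For $G^\dagger$, I invoke Remark~\ref{remark:gauss}, which for Gaussian decoders gives
\[
W_c^\dagger(P_X, P_G^\sigma) \;=\; \inf_{Q:\, Q_Z = P_Z} \E_{P_X}\E_{Q(Z|X)}\bigl[\|X - G(Z)\|^2\bigr] \;+\; d\sigma^2.
\]
Because the additive $d\sigma^2$ term does not depend on $G$, the minimizer $G^\dagger$ coincides with the minimizer of the first summand, which involves no $\sigma^2$ at all; hence $G^\dagger$ is $\sigma^2$-independent. For the last claim of the proposition, observe that $P_G^0(Y|Z)=\delta_{G(Z)}$ satisfies the Dirac hypothesis of Theorem~\ref{thm:main}, so $W_c(P_X,P_G^0)=W_c^\dagger(P_X,P_G^0)$ and the two problems share the minimizer $G^\dagger$.

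For the remaining claim that $G^*_\sigma$ genuinely depends on $\sigma^2$, I produce an explicit example. Take $d=1$, $P_X = P_Z = \mathcal{N}(0,1)$, and restrict the decoder family to linear maps $G_a(z) = a z$ with $a \geq 0$. Then $P_{G_a}^\sigma = \mathcal{N}(0, a^2 + \sigma^2)$, and the closed-form 2-Wasserstein distance between centered one-dimensional Gaussians yields
\[
W_c(P_X, P_{G_a}^\sigma) \;=\; \bigl(1 - \sqrt{a^2 + \sigma^2}\bigr)^2,
\]
which for $\sigma \in [0,1)$ is minimized at $a_\sigma^* = \sqrt{1 - \sigma^2}$, explicitly depending on $\sigma^2$. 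In parallel, the formula above for $W_c^\dagger$ reduces to maximizing $\E[XZ]$ over couplings of two standard normals; the maximum is attained at $Z=X$, giving $W_c^\dagger(P_X, P_{G_a}^\sigma) = (1-a)^2 + \sigma^2$, minimized at $a^\dagger = 1$ regardless of $\sigma$. Hence $G^*_\sigma \neq G^\dagger$ for every $\sigma > 0$.

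The main obstacle is not the argument itself---the $G^\dagger$ half is immediate from Remark~\ref{remark:gauss} and Theorem~\ref{thm:main}---but finding an example transparent enough to exhibit the $\sigma^2$-dependence of $G^*_\sigma$; the 1D Gaussian/linear setup above is clean because both $W_c$ and $W_c^\dagger$ admit closed forms. Conceptually, the gap arises because $\mathcal{P}(X\sim P_X, Y\sim P_G^\sigma)$ is strictly larger than $\mathcal{P}_{X,Y}$: unrestricted couplings may correlate the decoder noise $\sigma\epsilon$ with $X - G(Z)$, so $W_c$ can beat $W_c^\dagger - d\sigma^2$, and the optimal $G^*_\sigma$ shrinks its output magnitude to exploit this tradeoff.
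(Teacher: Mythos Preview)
Your proof is correct and follows essentially the same route as the paper: the $G^\dagger$ part is deduced from the additive $d\sigma^2$ structure of $W_c^\dagger$ (Remark~\ref{remark:gauss}/Corollary~\ref{corr:gauss}) together with Theorem~\ref{thm:main} at $\sigma^2=0$, and the $\sigma$-dependence of $G^*_\sigma$ is exhibited via the same one-dimensional Gaussian example with linear decoders $G(z)=az$, where $a^*_\sigma=\sqrt{1-\sigma^2}$ achieves $W_c=0$. Your version is slightly more explicit---you write down the closed-form $W_2$ between centered Gaussians and also verify directly that $a^\dagger=1$ in the example---but the argument is the same.
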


For the purpose of generative modeling, the noise $\sigma^2>0$ is often not desirable, and it is common practice to sample from the trained model $G^*$ by simply returning $G^*(Z)$ for $Z\sim P_Z$ without adding noise to the output. This leads to a mismatch between inference and training. Furthermore, VAE, AVB, and other similar variational methods
implicitly use $\sigma^2$ as a factor to balance the $\ell_2$ reconstruction cost and the KL-regularizer. 

In contrast, Proposition \ref{prop:vaes} shows that for \emph{the same Gaussian models} with \emph{any} given $\sigma^2\geq 0$ we can minimize $W_c^\dagger(P_X, P_G^\sigma)$ and the solution $G^\dagger$ will be indeed the one resulting in the smallest 2-Wasserstein distance between $P_X$ and the noiseless implicit model $G(Z)$, $Z\sim P_Z$ used in practice.

\paragraph{Relation to AAE}
Next, we wish to convey the following intriguing finding. 
Substituting an analytical form of $\log p_G(x|z)$ in \eqref{eq:obj-AAE}, we immediately see that the $D_\AAE$ objective \emph{coincides} with $D_\POT$ up to additive terms independent of $Q$ and $G$ when the regularization coefficient $\lambda$ is~set~to~$2\sigma^2$.

For $0<\sigma^2<\infty$ this means (see Remark \ref{remark:gauss}) that {\em AAE is minimizing the penalized relaxation $D_\POT$ of the constrained optimization problem corresponding to $W_c^\dagger(P_X, P_G^\sigma)$}.
The size of the gap between $D_\POT$ and $W_c^\dagger$ depends on the choice of $\lambda$, i.e., on $\sigma^2$.
If $\sigma^2\to0$, we know (Remark \ref{remark:gauss}) that the upper bound $W_c^\dagger$ converges to the OT cost $W_c$, however the relaxation $D_\POT$ gets loose, as $\lambda=2\sigma^2\to 0$.
In this case AAE approaches the classical unregularized auto-encoder and does not have any connections to the OT problem. 
If $\sigma^2\to\infty$, the solution of the penalized objective $D_\POT$ reaches the feasible region of the original constrained optimization problem \eqref{eq:final-constrained-objective}, because $\lambda=2\sigma^2\to\infty$, and as a result $D_\POT$ converges to $W_c^\dagger(P_X,P_G^\sigma)$.
In this case AAE is searching for the solution $G^\dagger$ of $\min_GW_c^\dagger(P_X,P_G^\sigma)$, 
which is also the function minimizing $W_c(P_X, P_G^0)$ for the deterministic encoder $Y = G(Z)$ according to Proposition \ref{prop:vaes}.
In other words, the function $G^\dagger$ learned by AAE with $\sigma^2\to \infty$ minimizes the 2-Wasserstein distance between $P_X$ and $G(Z)$ when $Z\sim P_Z$.

The authors of \cite{MNG17} tried to establish a connection between AAE and log-likelihood maximization.
They argued that AAE is ``a crude approximation'' to AVB.
Our results suggest that AAE is in fact attempting to minimize the 2-Wasserstein distance between $P_X$ and $P_G^\sigma$, which may explain its good empirical performance reported in~\cite{MSJ+16}.

\paragraph{Blurriness of VAE and AVB}
We next add to the discussion regarding the blurriness commonly attributed to VAE samples.
Our argument shows that VAE, AVB, and other methods based on the marginal log-likelihood \emph{necessarily} lead to an averaging in the input space if $P_G(Y|Z)$ are Gaussian.

First we notice that in the VAE and AVB objectives, for any fixed encoder $Q(Z|X)$, the decoder is minimizing the expected $\ell_2$-reconstruction cost $\E_{P_X}\E_{Q(Z|X)}\bigl[ \|X - G(Z)\|^2\bigr]$ with respect to~$G$.
The optimal solution $G^*$ is of the form $G^*(z) = \E_{P^*_z}[X]$, where $P_z^*(X)\propto P_X(X) Q(Z=z|X)$.
Hence, as soon as $\supp P^*_z$ is non-singleton, the optimal decoder $G^*$ will end up averaging points in the input space.
In particular this will happen whenever there are two points $x_1, x_2$ in $\supp P_X$ such that $\supp Q(Z|X=x_1)$ and $\supp Q(Z|X=x_2)$ overlap.

This overlap necessarily happens in VAEs, which use Gaussian encoders $Q(Z|X)$ supported on the entire $\Z$. 
When probabilistic encoders $Q$ are allowed to be flexible enough, as in AVB, for any fixed $P_G(Y|Z)$ the optimal $Q^*$ will \emph{try to invert the decoder} (see Appendix \ref{appendix:VAE}) and take the form
\begin{equation}
\label{eq:vae-opt-q}
Q^*(Z | X) \approx P_G(Z | X) \bydef \frac{P_G(X | Z) P_Z(Z)}{P_G(X)}.
\end{equation}
This approximation becomes exact in the nonparametric limit of $Q$.
When $P_G(Y|Z)$ is Gaussian we have $p_G(y|z)>0$ for all $y\in \X$ and $z\in \Z$,
showing that $\supp Q^*(Z | X=x)=\supp P_Z$ for all $x\in\X$.
This will again lead to the overlap of encoders if $\supp P_Z=\Z$.
In contrast, the optimal encoders of AAE and POT do not necessarily overlap, as they are not inverting the decoders.

The common belief today is that the blurriness of VAEs is caused by the $\ell_2$ reconstruction cost, or equivalently by the Gaussian form of decoders $P_G(Y|Z)$.
We argue that it is instead caused by the \emph{combination} of (a) Gaussian decoders and (b) the objective (KL-divergence) being minimized.

\subsection{The 1-Wasserstein distance: relation to WGAN}
\label{sec:WGAN}
We have shown that the $D_\POT$ criterion leads to a generalized version of the AAE algorithm and can be seen as a relaxation of the optimal transport cost $W_c$.
In particular, if we choose $c$ to be the Euclidean distance $c(x,y)=\|x-y\|$, we get
a primal formulation of $W_1$. This is the same criterion that WGAN aims to minimize in the dual formulation (see Eq.\,\ref{eq:KRD}).
As a result of Theorem \ref{thm:main}, we have
\[
W_1(P_X,P_G)=\!\! \inf_{Q:Q_Z=P_Z} \!\!\ee{X\sim P_X, Z\sim Q(Z|X)}{\|X-G(Z)\|} = \!\!\sup_{f\in \mathcal{F}_L} \ee{P_X}{f(X)} - \ee{P_Z}{f(G(Z))}.
\]
This means we can now approach the problem of optimizing $W_1$ in two distinct ways, taking gradient steps either in the primal or in the dual forms.
Denote by $Q^*$ the optimal encoder in the primal and $f^*$ the optimal witness function in the dual.
By the envelope theorem, gradients of $W_1$ with respect to $G$ can be computed by
taking a gradient of the criteria evaluated at the optimal points $Q^*$ or $f^*$.
%

Despite the theoretical equivalence of both approaches, practical considerations lead to different behaviours and to potentially poor approximations of the real gradients.
For example, in the dual formulation, one usually restricts the witness functions to be smooth, while in the primal formulation, the constraint on $Q$ is only approximately enforced. We will study the effect of these approximations.

\paragraph{Imperfect gradients in the dual (i.e., for WGAN)}
We show that (i) if the true optimum $f^*$ is not reached exactly (no matter how close), the effect on the gradient in the dual formulation can be arbitrarily large, and (ii) this also holds when the
optimization is performed only in a restricted class of smooth functions.
We write the criterion to be optimized as $J_D(f)\bydef \ee{P_X}{f(X)} - \ee{P_Z}{f(G(Z))}$ and
denote its gradient with respect to $G$ by $\nabla J_D(f)$. Let $\H$ be a subset of the $1$-Lipschitz functions $\F_L$ on $\X$ containing smooth functions with bounded Hessian. Denote by $f_\H^*$ the minimizer of $J_D$ in $\H$. $A(f,f')\bydef cos\left(\nabla J_D(f),\nabla J_D(f')\right)$ will
denote the cosine of the angle between the gradients of the criterion at different functions.
\begin{proposition}\label{prop:gradients}
There exists a constant $C>0$ such that for any $\epsilon > 0$, one can construct distributions $P_X$,
$P_G$ and pick witness functions $f_\epsilon \in \mathcal{F}_L$ and $h_{\epsilon}\in \H$ that are $\epsilon$-optimal $|J_D(f_\epsilon) -J_D(f^*)|\le \epsilon$, $|J_D(h_{\epsilon}) -J_D(h^*)|\le\epsilon$, but which give (at some point $z\in\Z$) gradients whose direction is at least $C$-wrong:
$A(f_\epsilon,f^*) \le  1-C$, $A(h_0,h^*) \le 1-C$, and $A(h_\epsilon, h^*) \le 0$.
\end{proposition}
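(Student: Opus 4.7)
The approach is to exhibit explicit one-dimensional counterexamples for each of the three assertions. The key observation is that the per-sample gradient at a fixed $z\in\Z$, namely $-f'(G(z))\nabla_\theta G(z)$, depends on $f$ only through the scalar $f'(G(z))$, while $J_D(f)=\E_{P_X}[f(X)]-\E_{P_Z}[f(G(Z))]$ is a global integral. This local/global mismatch is precisely what makes the statement possible: one can wiggle $f$ near the single point $y^{*}=G(z^{*})$ so as to flip $f'(y^{*})$ while controlling the change in $J_D$.

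\paragraph{Part 1 (unrestricted $\mathcal{F}_L$).} I take $\X=\R$, $P_X=\delta_{-1}$, $P_G=\delta_{1}$ realised by $G_\theta$ with $G(z^{*})=1$ and $\nabla_\theta G(z^{*})\neq 0$. The unique optimum is $f^{*}(x)=-x$ with per-sample gradient $+\nabla_\theta G(z^{*})$ at $z^{*}$. I build $f_\epsilon$ piecewise linear, equal to $f^{*}$ on $(-\infty,1-\epsilon/2]$, of slope $+1$ on $[1-\epsilon/2,1]$, and of slope $-1$ on $[1,+\infty)$. Then $f_\epsilon\in\mathcal{F}_L$, $f_\epsilon(1)=-1+\epsilon$, so $|J_D(f_\epsilon)-J_D(f^{*})|=\epsilon$; but $f_\epsilon'(1)=+1$, flipping the per-sample gradient at $z^{*}$ and giving $A(f_\epsilon,f^{*})=-1\le 1-C$ for any $C\le 2$.

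\paragraph{Parts 2 and 3 (smooth class $\H$).} Now the Hessian bound $M$ forbids a slope flip over a window narrower than $2/M$, so the wiggle width cannot shrink with $\epsilon$. I compensate by redistributing mass so that only a fraction $\Theta(\epsilon M)$ of the $P_Z$-mass lies on the "far" side of the wiggle where the perturbation acts. Concretely, let $P_Z$ put weight $1-\eta$ on a distribution supported in $(-\infty,1-1/M]$ and weight $\eta$ on one supported in $[1+1/M,+\infty)$, let $G$ be the identity, and let $P_X$ be the symmetric reflection so that $f^{*}(x)=-x$ remains optimal. Since $f^{*}$ is affine it lies in $\H$, so $h^{*}=f^{*}$, with per-sample gradient $+\nabla_\theta G(z^{*})$ at any chosen $z^{*}$ with $G(z^{*})\approx 1$. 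I define $h_\epsilon=-x+\phi$, where $\phi$ is the antiderivative of a smooth nonnegative bump of height $2$ and width $2/M$ centred at $1$; this makes $h_\epsilon$ of class $C^2$ with Hessian $\le M$, globally 1-Lipschitz (because $h_\epsilon'\in[-1,1]$), and $h_\epsilon'(1)=+1$. The perturbation produces a constant tail shift of size $c=O(1/M)$ for $x\ge 1+1/M$, affecting only the $\eta$-mass side, so $|J_D(h_\epsilon)-J_D(h^{*})|\le\eta\cdot c$; picking $\eta=\Theta(\epsilon M)$ yields $\epsilon$-optimality, while $A(h_\epsilon,h^{*})=-1\le 0$. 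For the exact-optimum statement $A(h_0,h^{*})\le 1-C$, I take the simpler setup $P_Z=\delta_{z_0}$, $P_X=\delta_{-L}$ with $G(z_0)=0$ and choose $z^{*}\neq z_0$ with $G(z^{*})=1$: the bump now lies outside $\supp P_Z$, so $h_0=-x+\phi$ is an \emph{exact} $\H$-maximum (tied with $h^{*}=-x$), and its per-sample gradient at $z^{*}$ is once again opposite, giving $A(h_0,h^{*})=-1\le 1-C$.

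\paragraph{Main obstacle.} The delicate step in Parts 2--3 is constructing a $C^2$ bump $\phi$ of compact support satisfying simultaneously (i) Hessian bounded by $M$, (ii) $\phi'(1)=2$, and (iii) $\phi'\ge 0$ everywhere (required for $-x+\phi$ to stay 1-Lipschitz). These three together force $\phi$ to be the antiderivative of a nonnegative smooth bump and hence to carry a nonzero tail shift of size $\Theta(1/M)$, independent of $\epsilon$. Consequently, the $\epsilon$-closeness in $J_D$ cannot be achieved by shrinking the bump; it must be secured by restricting the fraction of $P_Z$-mass lying on the far side of the bump, via $\eta$, or by placing the bump entirely outside $\supp P_Z$ for the exact-optimum case. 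Once these balancing conditions are verified, the constant $C$ in the proposition can be taken as any value strictly below $2$.
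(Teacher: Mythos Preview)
Your argument is correct in outline and proves the proposition, but the smooth-class part follows a different route from the paper's sketch. In Part~1 you and the paper agree: locally modify $f^*$ near a generated point without breaking the Lipschitz bound (one cosmetic fix: your $f_\epsilon$ has a kink at $x=1$, so evaluate at $G(z^*)$ just to the left of it, inside the slope-$+1$ region). For Parts~2--3 the paper keeps a \emph{two-point} generator $P_Z$ supported on $\{0,1\}$ with $y_i=G(i)$ and exploits the geometry of the pair $y_0,y_1$: for $A(h_0,h^*)\le 1-C$ it places $y_0,y_1$ close together (distance $\le K\epsilon$) and off the segment $[x_0,x_1]$, so that the unrestricted optimum $f^*$ has nearly opposite gradients at the two $y_i$ while the Hessian bound forces any $h\in\H$ to have nearly equal gradients there---hence one of the two in-support directions is off by a constant; for $A(h_\epsilon,h^*)\le 0$ it puts $y_0,y_1$ on $[x_0,x_1]$ so that $h^*$ is linear with opposite-sign slopes at the two $y_i$, and then simply \emph{translates} $h^*$ by $\epsilon$ to swap one sign at cost $O(\epsilon)$ in $J_D$. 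Your approach instead fixes a single target point, grafts on a smooth bump $\phi$ to flip the slope there, and absorbs the resulting $J_D$-perturbation either by putting only $\eta=\Theta(\epsilon)$ mass on the far side of the bump or by placing the bump entirely outside $\supp P_G$. This is more explicit and entirely one-dimensional, which buys clarity; the trade-off is that in your construction the mis-directed gradients occur at points $G(z^*)\notin\supp P_G$ (technically permitted by the phrase ``some $z\in\Z$''), whereas the paper's two-point construction produces wrong gradients at in-support generated points, which is the more damaging phenomenon for training. One minor constant: a nonnegative bump reaching height $2$ with second derivative bounded by $M$ needs width at least $4/M$, not $2/M$.
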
 

\paragraph{Imperfect posterior in the primal (i.e., for POT)}
In the primal formulation, when the constraint is violated, that is the aggregated posterior $Q_Z$ is not matching $P_Z$, there can be two kinds of negative effects: 
(i) the gradient of the criterion is only computed on a (possibly small) subset of the latent space reached by $Q_Z$;
(ii) several input points could be mapped by $Q(Z|X)$ to the same latent code $z$, thus giving gradients that encourage $G(z)$ to be the average/median of several inputs (hence encouraging a blurriness). See Section \ref{sec:VAEs} for the details and Figure \ref{fig:main} for an illustration.

\section{Conclusion}
\label{sec:conclusion}
This work proposes a way to fit generative models by minimizing any optimal transport cost.
It also establishes novel links between different popular unsupervised probabilistic modeling techniques.
Whilst our contribution is on the theoretical side, it is reassuring to note that the empirical results of~\cite{MSJ+16} show the strong performance of our method for the special case of the 2-Wasserstein distance. Experiments with other cost functions $c$ are beyond the scope of the present work and left for the future studies.

\section*{Acknowledgments}
The authors are thankful to Mateo Rojas-Carulla and Fei Sha for stimulating discussions.
CJSG is supported by a Google European Doctoral Fellowship in Causal Inference.

\bibliographystyle{unsrt}
\bibliography{nips2017vegan}

\newpage
\appendix 
\section{Further details on VAEs and GANs}
\label{appendix:VAE}
\paragraph{VAE, KL-divergence and a  marginal log-likelihood}
For models $P_G$ of the form \eqref{eq:latent-var} and \emph{any} conditional distribution $Q(Z|X)$ it can be easily verified that
\begin{align}
\notag
-\E_{P_X}[ \log P_G(X) ]
=
&-\E_{P_X}\bigl[
D_{\KL}\bigl( Q(Z|X), P_G(Z|X) \bigr)
\bigr]\\
\label{eq:app-elbo}
&+
\E_{P_X}\left[
D_\KL\bigl(Q(Z|X), P_Z\bigr)
-
\E_{Q(Z|X)}[\log p_G(X|Z)]
\,
\right].
\end{align}
Here the conditional distribution $P_G(Z|X)$ is induced by a joint distribution $P_{G,Z}(X,Z)$,
which is in turn specified by the 2-step latent variable procedure: (a) sample $Z$ from $P_Z$, (b) sample $X$ from $P_G(X|Z)$.
Note that the first term on the r.h.s.\:of \eqref{eq:app-elbo} is always non-positive, while
the l.h.s.\:does not depend on $Q$.
This shows that if conditional distributions $Q$ are not restricted then
\[
-\E_{P_X}[ \log P_G(X) ] = \inf_Q
\E_{P_X}\left[
D_\KL\bigl(Q(Z|X), P_Z\bigr)
-
\E_{Q(Z|X)}[\log p_G(X|Z)]
\,
\right],
\]
where the infimum is achieved for $Q(Z|X) = P_G(Z|X)$.
However, for any restricted class $\mathcal{Q}$ of conditional distributions $Q(Z|X)$ we only have
\begin{align*}
&-\E_{P_X}[ \log P_G(X) ]\\
&=
\inf_{Q}\!-\E_{P_X}\bigl[
D_{\KL}\bigl( Q(Z|X), P_G(Z|X) \bigr)
\bigr]
\!+\!
\E_{P_X}\left[
D_\KL\bigl(Q(Z|X), P_Z\bigr)
\!-\!
\E_{Q(Z|X)}[\log p_G(X|Z)]
\,
\right]\\
&\leq
\inf_{Q\in\mathcal{Q}}
\E_{P_X}\left[
D_\KL\bigl(Q(Z|X), P_Z\bigr)
-
\E_{Q(Z|X)}[\log p_G(X|Z)]
\,
\right] = D_\VAE(P_X,P_G),
\end{align*}
where the inequality accounts for the fact that $Q(Z|X)$ might be not flexible enough to match $P(Z|X)$ for all values of $X$.

\paragraph{Relation between AAE, AVB, and VAE}
\begin{proposition}
For any distributions $P_X$ and $P_G$:
\[
D_\AAE(P_X, P_G)
\leq
D_\AVB(P_X, P_G).
\]
\end{proposition}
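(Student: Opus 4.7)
The two objective functionals share the same reconstruction term $-\E_{P_X}\E_{Q(Z|X)}[\log p_G(X|Z)]$ and are minimised over the same class $\mathcal{Q}$ of encoders, so the claim reduces to the pointwise inequality of regularisers
\[
D_\GAN(Q_Z, P_Z) \;\leq\; \E_{P_X}\bigl[D_\fGAN(Q(Z|X), P_Z)\bigr]
\]
for every $Q(Z|X)\in\mathcal{Q}$, from which $D_\AAE\le D_\AVB$ follows by adding the common reconstruction term and taking the infimum over $Q$ on both sides (monotonicity of $\inf$).

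The plan for the pointwise inequality is to show that the two sides are separated by zero. For the left-hand side, I would observe from \eqref{eq:GAN} that any discriminator entering $D_\GAN$ maps into $(0,1)$, so the integrand $\log T(Z) + \log(1-T(Z))$ is pointwise non-positive; consequently $D_\GAN(Q_Z,P_Z)\le 0$ irrespective of the discriminator class $\mathcal{T}$. This is consistent with the non-parametric-limit value $2 D_\JS(Q_Z,P_Z) - \log 4\in[-\log 4, 0]$ quoted in the paper. For the right-hand side, recall that $D_\fGAN$ for the KL divergence is the variational lower bound obtained via the Fenchel conjugate of $u\mapsto u\log u$, i.e.\ $\sup_T\{\E_Q[T] - \E_P[\exp(T-1)]\}$ over a class of test functions. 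Evaluating this functional at the constant $T\equiv 1$ yields $1-1=0$, so the supremum is non-negative for every pair of distributions. Taking $Q=Q(Z|X{=}x)$ and $P=P_Z$ and integrating against $P_X$ preserves non-negativity, giving $\E_{P_X}[D_\fGAN(Q(Z|X),P_Z)]\ge 0$. Chaining the two sign facts yields exactly the pointwise inequality required.

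I do not foresee a genuine obstacle. The only mild subtlety is that the test-function class underlying $D_\fGAN$ must contain the constant $T\equiv 1$ in order to certify the $\ge 0$ lower bound on the $f$-GAN regulariser; this is automatic under the non-parametric convention implicitly adopted for the adversarial approximations throughout the paper. The final conclusion $D_\AAE(P_X,P_G)\le D_\AVB(P_X,P_G)$ then follows in one line by taking $\inf_{Q\in\mathcal{Q}}$ on both sides.
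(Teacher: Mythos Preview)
Your argument is correct, with one cosmetic quibble: in $D_\GAN(Q_Z,P_Z)=\sup_T\bigl(\E_{Q_Z}[\log T]+\E_{P_Z}[\log(1-T)]\bigr)$ the two expectations are against \emph{different} measures, so there is no single ``integrand $\log T(Z)+\log(1-T(Z))$''. The conclusion $D_\GAN\le 0$ still follows because each summand is separately non-positive for $T\in(0,1)$, so the upshot is unchanged.

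Your route, however, is genuinely different from the paper's. The paper does not compare signs; it uses that $D_\GAN(\cdot,\cdot)$ is jointly convex in its arguments (a supremum of affine functionals) together with Jensen's inequality to obtain
\[
D_\GAN\!\Bigl(\textstyle\int Q(Z|x)\,p_X(x)\,dx,\;P_Z\Bigr)\;\le\;\E_{P_X}\bigl[D_\GAN\bigl(Q(Z|X),P_Z\bigr)\bigr],
\]
and then identifies the right-hand side with the AVB regulariser. That approach is more informative structurally: it shows the gap between AAE and AVB comes from \emph{aggregating before versus after} the divergence, and it would go through verbatim if both objectives used the very same adversarial surrogate. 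Your sign argument, by contrast, is shorter and more elementary, and it has the incidental virtue of handling cleanly the fact that the paper defines $D_\AAE$ with $D_\GAN$ but $D_\AVB$ with $D_\fGAN$ (the paper's own proof tacitly equates the two at the last step). The price is that your inequality is rather loose---it separates the two regularisers by zero---and it hinges on the test-function class for $D_\fGAN$ containing the constant $T\equiv 1$, as you rightly flag.
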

\begin{proof}
By Jensen's inequality and the joint convexity of $D_\GAN$ we have
\begin{align*}
D_\AAE(P_X, P_G) &= 
\inf_{Q(Z|X)\in\mathcal{Q}} 
D_\GAN\left(
{\textstyle\int_{\X} Q(Z|x)p_X(x)dx, P_Z}
\right)
-
\E_{P_X}\E_{Q(Z|X)}[\log p_G(X|Z)]\\
&\leq
\inf_{Q(Z|X)\in\mathcal{Q}} 
\E_{P_X}\left[D_\GAN\left(
{Q(Z|X)p_X(x)dx, P_Z}
\right)
-
\E_{Q(Z|X)}[\log p_G(X|Z)]\right]\\
& = D_\AVB(P_X,P_G).
\end{align*}
\end{proof}

Under certain assumptions it is also possible to link $D_\AAE$ to $D_\VAE$:
\begin{proposition}
Assume $D_\KL(Q(Z|X), P_Z) \geq 1/4$ for all $Q\in\mathcal{Q}$ with $P_X$-probability 1. Then
\[
D_\AAE(P_X,P_G) \leq D_\VAE(P_X,P_G).
\]
\end{proposition}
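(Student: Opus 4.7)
The plan is to reduce the claim to a pointwise comparison of regularizers. Unfolding the definitions of $D_\VAE$ in \eqref{eq:VAE} and $D_\AAE$ in \eqref{eq:obj-AAE}, both objectives are infima over $Q\in\mathcal{Q}$ of the sum of (i) a common reconstruction term $-\E_{P_X}\E_{Q(Z|X)}[\log p_G(X|Z)]$ and (ii) a regularizer on $Q$ relative to $P_Z$, namely $\E_{P_X}[D_\KL(Q(Z|X),P_Z)]$ for VAE and $D_\GAN(Q_Z,P_Z)$ for AAE. Consequently, it suffices to show the pointwise inequality
\[
D_\GAN(Q_Z,P_Z) \;\le\; \E_{P_X}\!\bigl[D_\KL(Q(Z|X),P_Z)\bigr] \qquad \text{for every } Q\in\mathcal{Q},
\]
and then add the identical reconstruction term to both sides and take $\inf_{Q\in\mathcal{Q}}$ (the infimum preserves pointwise inequalities) to conclude $D_\AAE(P_X,P_G) \le D_\VAE(P_X,P_G)$.

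For the left-hand side, I would invoke the nonparametric bound $D_\GAN(P,Q) \le 2 D_\JS(P,Q) - \log 4$ recalled in Section~\ref{sec:vae+gan}. Restricting the sup in the definition of $D_\GAN$ to a subclass $\mathcal{T}$ can only decrease its value, so the inequality applies a fortiori. Combined with the elementary fact $D_\JS(P,Q) \le \log 2$, this yields the uniform bound $D_\GAN(Q_Z,P_Z) \le 2\log 2 - \log 4 = 0$, holding for every $Q\in\mathcal{Q}$ irrespective of the hypothesis.

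For the right-hand side, the hypothesis $D_\KL(Q(Z|X),P_Z) \ge 1/4$ with $P_X$-probability one integrates immediately to $\E_{P_X}[D_\KL(Q(Z|X),P_Z)] \ge 1/4$. Chaining with the previous step,
\[
D_\GAN(Q_Z,P_Z) \;\le\; 0 \;<\; \tfrac14 \;\le\; \E_{P_X}\!\bigl[D_\KL(Q(Z|X),P_Z)\bigr],
\]
which is the required pointwise inequality. There is no real obstacle here: the two halves of the bound meet comfortably because $D_\GAN$ is automatically nonpositive while the assumption forces the KL regularizer to be strictly positive. I would flag in a remark that the hypothesis is in fact avoidable, since joint convexity of $D_\KL$ gives $D_\KL(Q_Z,P_Z)\le\E_{P_X}[D_\KL(Q(Z|X),P_Z)]$ and a short calculation shows $2D_\JS(P,Q)\le D_\KL(P,Q)$, together implying $D_\GAN(Q_Z,P_Z)\le \E_{P_X}[D_\KL(Q(Z|X),P_Z)]-\log 4$; but the direct route above is cleaner and matches the statement as written.
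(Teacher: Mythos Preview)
Your proof is correct and takes a genuinely different route from the paper's. The paper establishes the pointwise comparison of regularizers by chaining $D_\GAN \le 2D_\JS - \log 4 \le D_\JS$, then applying joint convexity of $D_\JS$ to pass from $Q_Z$ to the mixture over $X$, then invoking Pinsker via $D_\JS \le \tfrac12 D_{\mathrm{TV}} \le \tfrac12\sqrt{D_\KL}$, and finally using the hypothesis $D_\KL \ge 1/4$ precisely to convert $\tfrac12\sqrt{D_\KL}$ into $D_\KL$ (since $\tfrac12\sqrt{x}\le x$ iff $x\ge 1/4$). You instead observe that $D_\JS\le\log 2$ makes $D_\GAN\le 0$ outright, while the hypothesis forces the KL regularizer to be at least $1/4$, so the two sides never meet. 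This is shorter and avoids Pinsker's inequality and the convexity step entirely; it also makes transparent that the bound is rather loose (you are comparing to $0$, the paper drops a $-\log 4$). The paper's route, by contrast, tracks the actual relationship between the divergences and would adapt more naturally if one wanted a quantitative version.

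Your closing remark is also correct and arguably the most interesting observation here: the identity $D_\KL(P\|Q) - 2D_\JS(P,Q) = 2D_\KL(M\|Q)\ge 0$ with $M=\tfrac12(P+Q)$ gives $2D_\JS\le D_\KL$ unconditionally, and combined with convexity of $D_\KL$ in its first argument one gets $D_\GAN(Q_Z,P_Z)\le \E_{P_X}[D_\KL(Q(Z|X),P_Z)] - \log 4$ without any hypothesis on $\mathcal{Q}$. This renders the assumption $D_\KL\ge 1/4$ superfluous for the stated conclusion, something the paper's argument structure obscures.
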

\begin{proof}
We already mentioned that 
$
D_\GAN(P, Q)\leq2\cdot D_\JS(P, Q) - \log(4)
$
for any distributions $P$ and~$Q$.
Furthermore, $D_\JS(P,Q) \leq \frac{1}{2}D_{\mathrm{TV}}(P,Q)$ \cite[Theorem 3]{L91}
and $D_{\mathrm{TV}}(P,Q) \leq \sqrt{D_{\KL}(P,Q)}$ \cite[Eq.\,2.20]{T08}, which leads to
\[
D_\JS(P,Q) \leq \frac{1}{2} \sqrt{D_{\KL}(P,Q)}.
\]
Together with the joint convexity of $D_\JS$ and Jensen's inequality this implies
\begin{align*}
D_\AAE(P_X, P_G) &:= 
\inf_{Q(Z|X)\in\mathcal{Q}} 
D_\GAN\left(
{\textstyle\int_{\X} Q(Z|x)p_X(x)dx, P_Z}
\right)
-
\E_{P_X}\E_{Q(Z|X)}[\log p_G(X|Z)]\\
&\leq
\inf_{Q(Z|X)\in\mathcal{Q}} 
D_\JS\left(
{\textstyle\int_{\X} Q(Z|x)p_X(x)dx, P_Z}
\right)
-
\E_{P_X}\E_{Q(Z|X)}[\log p_G(X|Z)]\\
&\leq
\inf_{Q(Z|X)\in\mathcal{Q}} 
\E_{P_X}\left[
D_\JS\left(
{Q(Z|X), P_Z}
\right)
-
\E_{Q(Z|X)}[\log p_G(X|Z)]\right]\\
&\leq
\inf_{Q(Z|X)\in\mathcal{Q}} 
\E_{P_X}\left[
\frac{1}{2}\sqrt{D_\KL\left(
{Q(Z|X), P_Z}
\right)}
-
\E_{Q(Z|X)}[\log p_G(X|Z)]\right]\\
&\leq
\inf_{Q(Z|X)\in\mathcal{Q}} 
\E_{P_X}\left[
D_\KL\left(
{Q(Z|X), P_Z}
\right)
-
\E_{Q(Z|X)}[\log p_G(X|Z)]\right]\\& = D_\VAE(P_X,P_G).
\end{align*}
\end{proof}

\section{Proofs}
\label{appendix:proofs}

\subsection{Proof of Theorem \ref{thm:main}}
We start by introducing an important lemma relating the two sets over which $W_c$ and $W_c^\dagger$ are optimized.
\begin{lemma}\label{lemma:main}
$\mathcal{P}_{X,Y} \subseteq \mathcal{P}(P_X,P_G)$ with identity if $P_G(Y|Z=z)$ are Dirac distributions for all $z\in\Z$\footnote{We conjecture that this is also a necessary condition. The necessity is not used in the remainder of the paper.}.
\end{lemma}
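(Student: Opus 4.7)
The plan is to establish both directions of the claim separately. The forward inclusion $\mathcal{P}_{X,Y}\subseteq\mathcal{P}(P_X,P_G)$ holds with no assumption on $P_G(Y|Z)$ and is essentially by definition: if $(X,Y,Z)$ has a law in $\mathcal{P}_{X,Y,Z}$, then $X\sim P_X$ is imposed directly, and $(Y,Z)\sim P_{G,Z}$ implies $Y\sim P_G$ because $P_G$ was defined in Section \ref{sec:reparam} to be the $Y$-marginal of $P_{G,Z}$. Thus the $(X,Y)$-marginal lies in $\mathcal{P}(P_X,P_G)$. The conditional independence condition $(Y\independent X)|Z$ is not used here; it only restricts which couplings we can reach.

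For the reverse inclusion under the Dirac assumption $P_G(Y|Z=z)=\delta_{G(z)}$, I would argue constructively. Pick any $\Gamma\in\mathcal{P}(P_X,P_G)$ and let $K(y,dz)$ be a regular version of the conditional distribution of $Z$ given $Y$ under $P_{G,Z}$ (this exists under the standing Polish-space assumptions). Define a joint law on $\X\times\X\times\Z$ by
\[
P(dx,dy,dz) := \Gamma(dx,dy)\,K(y,dz).
\]
It remains to check that $P$ lies in $\mathcal{P}_{X,Y,Z}$ and that its $(X,Y)$-marginal is $\Gamma$. The $(X,Y)$-marginal is $\Gamma$ because $K(y,\Z)=1$; the $X$-marginal is $P_X$ because $\Gamma$ has $X$-marginal $P_X$; and the $(Y,Z)$-marginal is $P_G(dy)K(y,dz)=P_{G,Z}(dy,dz)$ by definition of the conditional kernel $K$, using that $\Gamma$ has $Y$-marginal $P_G$.

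The Dirac hypothesis enters exactly at the verification of the conditional independence $(Y\independent X)|Z$. Under $P_G(Y|Z=z)=\delta_{G(z)}$, the kernel $K(y,\cdot)$ is supported on $\{z:G(z)=y\}$, so under the constructed $P$ we have $Y=G(Z)$ almost surely. In other words, $Y$ is a measurable function of $Z$, which makes $Y$ trivially independent of $X$ given $Z$. This places $P$ in $\mathcal{P}_{X,Y,Z}$ and proves that $\Gamma\in\mathcal{P}_{X,Y}$, giving $\mathcal{P}(P_X,P_G)\subseteq\mathcal{P}_{X,Y}$.

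The main obstacle is the conditional independence check: if $P_G(Y|Z)$ is not Dirac, the same construction $P=\Gamma\otimes K$ will reproduce the correct $(Y,Z)$-marginal but will generally fail $(Y\independent X)|Z$, because $P(Y|X,Z)$ will depend on $X$ through $\Gamma(Y|X)$ rather than equal the prescribed $P_G(Y|Z)$. A secondary (routine) concern is the existence of the regular conditional kernel $K$, which is standard provided the underlying spaces are sufficiently regular (e.g., Polish), an assumption implicit throughout the paper.
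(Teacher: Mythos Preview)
Your proof is correct and follows the same key idea as the paper: the Dirac hypothesis forces $Y=G(Z)$ almost surely, which makes the conditional independence $(Y\independent X)\mid Z$ automatic. Your explicit gluing construction via the regular conditional kernel $K(y,dz)$ spells out a step the paper leaves implicit, but the approaches are otherwise identical.
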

\begin{proof}
The first assertion is obvious.
To prove the identity, note that when $Y$ is a deterministic function of~$Z$, for any $A$ in the sigma-algebra induced by $Y$ we have $\e{\oo{Y\in A}|X,Z}=\e{\oo{Y\in A}|Z}$. This implies $(Y\independent X)|Z$ and concludes the proof.
\end{proof}

\label{appendix:thm:main}
Inequality \eqref{eq:thm-main-1} and the first identity in \eqref{eq:thm-main-2} obviously follows from Lemma \ref{lemma:main}.
The tower rule of expectation, and the conditional independence property of $\mathcal{P}_{X,Y,Z}$ implies
\begin{align*}
W_c^\dagger(P_X, P_G)
&=
\inf_{P\in \mathcal{P}_{X,Y,Z}} \ee{(X, Y,Z)\sim P}{c(X,Y)}\\
&=
\inf_{P\in \mathcal{P}_{X,Y,Z}} \E_{P_Z} \E_{X\sim P(X|Z)} \E_{Y\sim P(Y|Z)} [c(X,Y)]\\
&=
\inf_{P\in \mathcal{P}_{X,Y,Z}} \E_{P_Z} \E_{X\sim P(X|Z)} \bigl[c\bigl(X,G(Z)\bigr)\bigr]\\
&=
\inf_{P\in \mathcal{P}_{X,Z}} \E_{(X,Z)\sim P} \bigl[c\bigl(X,G(Z)\bigr)\bigr].
\end{align*}

\subsection{Random decoders $P_G(Y|Z)$}
\label{appendix:corr-gauss}

\begin{corollary}
\label{corr:gauss}
Let $\X=\R^d$ and assume the conditional distributions $P_G(Y|Z=z)$ have mean values $G(z)\in\R^d$ and marginal variances $\sigma_1^2,\dots,\sigma_d^2\geq0$ for all $z\in\Z$, where $G\colon \Z \to \X$. 
Take $c(x,y) = \|x - y\|^2_2$. Then
\begin{equation}
\label{eq:upper-bound-gauss}
W_c(P_X, P_G)
\leq
W_c^\dagger(P_X, P_G)
=
\sum_{i=1}^d \sigma^2_i
+
\inf_{P\in \mathcal{P}(X\sim P_X, Z\sim P_Z)} \E_{(X,Z)\sim P}\bigl[\| X - G(Z)\|^2\bigr].
\end{equation}
\end{corollary}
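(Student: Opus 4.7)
The plan is to follow the same reparametrization strategy used in the proof of Theorem \ref{thm:main}, but replace the trivial inner expectation (which collapsed when $P_G(Y|Z)$ was Dirac) with a bias–variance decomposition that accounts for the marginal variances $\sigma_i^2$. The inequality $W_c(P_X,P_G)\le W_c^\dagger(P_X,P_G)$ is immediate from Lemma \ref{lemma:main}, so the only work is the equality identifying $W_c^\dagger$ with the shifted coupling problem on $(X,Z)$.

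First I would unfold the definition of $W_c^\dagger$ in terms of $\mathcal{P}_{X,Y,Z}$ and apply the tower rule together with the conditional independence $(Y\independent X)\mid Z$ built into $\mathcal{P}_{X,Y,Z}$, to write
\begin{equation*}
W_c^\dagger(P_X,P_G)=\inf_{P\in\mathcal{P}_{X,Y,Z}}\E_{P_Z}\,\E_{X\sim P(X|Z)}\,\E_{Y\sim P_G(Y|Z)}\bigl[\|X-Y\|^2\bigr].
\end{equation*}
Next I would handle the innermost expectation by a pointwise bias–variance computation: for fixed $x\in\R^d$ and $z\in\Z$, since $P_G(Y|Z=z)$ has mean $G(z)$ and marginal variances $\sigma_i^2$,
\begin{equation*}
\E_{Y\sim P_G(Y|Z=z)}\bigl[\|x-Y\|^2\bigr]=\|x-G(z)\|^2+\sum_{i=1}^d \sigma_i^2,
\end{equation*}
because $\E\|Y-G(z)\|^2=\sum_i \V(Y_i\mid Z=z)=\sum_i \sigma_i^2$ and the cross term vanishes.

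Then I would substitute this back and pull the constant $\sum_i\sigma_i^2$ out of the infimum, leaving
\begin{equation*}
W_c^\dagger(P_X,P_G)=\sum_{i=1}^d\sigma_i^2+\inf_{P\in\mathcal{P}_{X,Y,Z}}\E_{P_Z}\,\E_{X\sim P(X|Z)}\bigl[\|X-G(Z)\|^2\bigr].
\end{equation*}
Finally, since the remaining integrand depends only on $(X,Z)$, the infimum over $\mathcal{P}_{X,Y,Z}$ reduces to an infimum over the induced marginals $\mathcal{P}_{X,Z}$, which by the remark just before Theorem \ref{thm:main} coincides with $\mathcal{P}(X\sim P_X,\,Z\sim P_Z)$. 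This yields exactly \eqref{eq:upper-bound-gauss}.

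I do not expect a serious obstacle here; the only thing to be careful about is that the bias–variance identity needs only the mean and the $d$ marginal (per-coordinate) variances of $P_G(Y|Z=z)$, not its covariance structure, so the hypotheses of the corollary are exactly sufficient. The rest is a direct adaptation of the Theorem \ref{thm:main} argument.
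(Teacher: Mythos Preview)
Your proposal is correct and follows essentially the same approach as the paper: both use the tower rule with the conditional independence $(Y\independent X)\mid Z$ to isolate $\E_{Y\sim P_G(Y|Z)}\|X-Y\|^2$, perform the bias--variance split around $G(Z)$ to extract the constant $\sum_i\sigma_i^2$, and then reduce the remaining infimum to one over $\mathcal{P}_{X,Z}=\mathcal{P}(X\sim P_X,Z\sim P_Z)$. Your remark that only the per-coordinate variances (not the full covariance) are needed is exactly the point behind the cross-term vanishing in the paper's computation.
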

\begin{proof}
Proof is similar to the one of Theorem \ref{thm:main}. See Section \ref{appendix:corr-gauss}.
\end{proof}

First inequality follows from \eqref{eq:thm-main-1}.
For the identity we proceed similarly to the proof of Theorem \ref{thm:main} and write
\begin{equation}
\label{eq:proof-gauss}
W_c^\dagger(P_X, P_G)
=
\inf_{P\in \mathcal{P}_{X,Y,Z}} \E_{P_Z} \E_{X\sim P(X|Z)} \E_{Y\sim P(Y|Z)} \bigl[\|X - Y\|^2\bigr].
\end{equation}
Note that
\begin{align*}
&\E_{Y\sim P(Y|Z)} \bigl[\|X - Y\|^2\bigr]
=
\E_{Y\sim P(Y|Z)} \bigl[\|X - G(Z) + G(Z) - Y\|^2\bigr]\\
&=
\|X - G(Z)\|^2
+
\E_{Y\sim P(Y|Z)} \bigl[\langle
X - G(Z), G(Z) - Y
\rangle\bigr]
+
\E_{Y\sim P(Y|Z)}\|G(Z) - Y\|^2\\
&=
\|X - G(Z)\|^2 + \sum_{i=1}^d \sigma_i^2.
\end{align*}
Together with \eqref{eq:proof-gauss} and the fact that $\mathcal{P}_{X,Z}=\mathcal{P}(X\sim P_X, Z\sim P_Z)$ this concludes the proof.

\subsection{Proof of Proposition \ref{prop:vaes}}
\label{appendix:prop-vaes}

Corollary \ref{corr:gauss} shows that $G^\dagger$ does not depend on the variance $\sigma^2$.
When $\sigma^2=0$ the distribution $P_G(Y|Z)$ turns into Dirac.
In this case we combine Theorem \ref{thm:main} and Corollary \ref{corr:gauss} to conclude that $G^\dagger$ also minimizes $W_c(P_X,P_G^0)$.
Next we prove that $G^*_{\sigma}$ generally depends on~$\sigma^2$.

We will need the following simple result, which is basically saying that the variance of a sum of two independent random variables is a sum of the variances:
\begin{lemma}
\label{lemma:app-prop}
Under conditions of Proposition \ref{prop:vaes}, assume $Y \sim P_G^{\sigma}$. Then
\[
\mathrm{Var}[ Y ] = \sigma^2 + \mathrm{Var}_{Z\sim P_Z}[G(Z)].
\]
\end{lemma}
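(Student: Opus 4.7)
The plan is to exhibit $Y \sim P_G^\sigma$ as an additive decomposition $Y = G(Z) + \varepsilon$ in which the two summands are independent, and then invoke the standard additivity of variance for independent random variables. The hint in the lemma's statement already signals that this is the intended route.

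More concretely, I would proceed as follows. First, I would make the sampling procedure explicit: draw $Z \sim P_Z$, and then draw $\varepsilon \sim \mathcal{N}(0, \sigma^2 \cdot I_d)$ \emph{independently} of $Z$, and set $Y := G(Z) + \varepsilon$. Because $P_G(Y|Z=z) = \mathcal{N}(Y; G(z), \sigma^2\!\cdot\!I_d)$, the marginal distribution of $Y$ under this construction is precisely $P_G^\sigma$, so it suffices to compute $\mathrm{Var}[Y]$ in this representation. Second, I would note that $G(Z)$ is a (measurable) function of $Z$ alone, hence independent of $\varepsilon$.

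Third, I would apply the additivity of (co)variance for independent summands: for independent random vectors $A$ and $B$ in $\R^d$, $\mathrm{Var}[A+B] = \mathrm{Var}[A] + \mathrm{Var}[B]$ (entrywise on the covariance matrix, or on any marginal). This gives
\[
\mathrm{Var}[Y] \;=\; \mathrm{Var}[G(Z)] + \mathrm{Var}[\varepsilon] \;=\; \mathrm{Var}_{Z\sim P_Z}[G(Z)] + \sigma^2,
\]
since $\mathrm{Var}[\varepsilon] = \sigma^2$ (read coordinate-wise, consistent with the scalar $\sigma^2$ notation in the statement, or as $\sigma^2\!\cdot\!I_d$ if one interprets $\mathrm{Var}[\cdot]$ as the covariance matrix). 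This is exactly the claimed identity.

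There is essentially no obstacle: the only thing to be careful about is the vector-valued interpretation of $\mathrm{Var}[\cdot]$. I would simply note once that the identity holds coordinate-wise (and hence also for the covariance matrix), so that the scalar notation $\sigma^2 + \mathrm{Var}_{Z\sim P_Z}[G(Z)]$ in the statement is unambiguous. No appeal to the law of total variance is needed beyond this direct additivity argument, though of course one could equivalently decompose $\mathrm{Var}[Y] = \E[\mathrm{Var}[Y|Z]] + \mathrm{Var}[\E[Y|Z]]$ and use $\E[Y|Z] = G(Z)$, $\mathrm{Var}[Y|Z] = \sigma^2$.
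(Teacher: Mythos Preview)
Your proposal is correct and uses essentially the same decomposition as the paper: both exploit that $\E[Y|Z]=G(Z)$ and $\mathrm{Var}[Y|Z]=\sigma^2$. The only cosmetic difference is packaging---the paper expands $(y-\E[G(Z)])^2$ and integrates directly (implicitly dropping the cross term because $\E[Y-G(Z)\mid Z]=0$), whereas you introduce the independent noise $\varepsilon$ explicitly and cite additivity of variance; your law-of-total-variance remark is in fact exactly what the paper's integral computation is carrying out line by line.
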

\begin{proof}
First of all, using \eqref{eq:latent-var} we have
\[
\E[Y] := 
\int_\R y \int_\Z p_G(y|z) p_Z(z) dz dy= 
\int_\Z \left(\int_\R y  \;p_G(y|z) dy\right) p_Z(z) dz
=
\E_{Z\sim P_Z}[G(Z)].
\]
Then
\begin{align*}
\mathrm{Var}[ Y ] &:=
\int_\R (y - \E[G(Z)])^2 \int_\Z p_G(y|z) p_Z(z) dz dy\\
&=
\int_\R (y - G(z))^2 \int_\Z p_G(y|z) p_Z(z) dz dy
+
\int_\R (G(z) - \E[G(Z)])^2 \int_\Z p_G(y|z) p_Z(z) dz dy\\
&=
\sigma^2 + \mathrm{Var}_{Z\sim P_Z}[G(Z)].
\end{align*}
\end{proof}

Next we prove the remaining implication of Proposition \ref{prop:vaes}. 
Namely, that when $\sigma^2 > 0$ the function $G^*_{\sigma}$ minimizing $W_c(P_X, P_G^\sigma)$ depends on $\sigma^2$.
The~proof is based on the following example: $\X = \Z = \R$, $P_X = \mathcal{N}(0,1)$, $P_Z = \mathcal{N}(0,1)$, and $0<\sigma^2<1$.
Note that by setting $G(z) = c\cdot z$ for any $c > 0$ we ensure that $P_G^{\sigma}$ is the Gaussian distribution, because a convolution of two Gaussians is also Gaussian.
In particular if we take $G^*(z) = \sqrt{1-\sigma^2}\cdot z$ Lemma \ref{lemma:app-prop} implies that $P_{G^*}^\sigma$ is the standard normal Gaussian $\mathcal{N}(0,1)$.
In other words, we obtain the global minimum $W_c(P_X, P_{G^*}^\sigma) = 0$ and $G^*$ clearly depends on $\sigma^2$.

\subsection{Proof of Proposition \ref{prop:gradients}}
\label{appendix:prop-grad}
We just give a sketch of the proof: consider discrete distributions $P_X$ supported on two points 
$\{x_0,x_1\}$, and $P_Z$ supported on $\{0,1\}$ and let $y_0=G(0)$, $y_1=G(1)$ ($y_0\ne y_1$). Given an optimal $f^*$, one can modify locally it around $y_0$ without changing its Lipschitz
constant such that the obtained $f_\epsilon$ is an $\epsilon$-approximation of $f^*$ whose gradients
at $y_0$ and $y_1$ point in directions arbitrarily different from those of $f^*$.
For smooth functions, by moving $y_0$ and $y_1$ away from the segment $[x_0,x_1]$ but close to
each other $\|y_0-y_1\|\le K\epsilon$, the gradients of $f^*$ will point in directions roughly opposite but
the constraint on the Hessian will force the gradients of $f_{\F,0}$ at $y_0$ and $y_1$ to be very close.
Finally, putting $y_0,y_1$ on the segment $[x_0,x_1]$, one can get an $f_\F^*$ whose gradients at
$y_0$ and $y_1$ are exactly opposite, while taking $f_{\F,\epsilon}(y)=f_\F^*(y+\epsilon)$, we can
swap the direction at one of the points while changing the criterion by less than $\epsilon$.

\end{document}